\documentclass[journal]{IEEEtran}

\usepackage{microtype}
\usepackage{graphicx}
\usepackage{booktabs} 
\usepackage{xcolor}
\usepackage{subcaption}
\usepackage{hyperref}
\usepackage{multirow}

\usepackage{amsthm}
\usepackage{amsfonts}
\usepackage{thmtools}
\usepackage{thm-restate}
\usepackage{mathtools}

\declaretheorem[name=Definition]{definition}

\newcommand \TODO [1] {}

\newcommand{\E}{\mathbb E}

\newcommand \mc \mathcal

\newcommand {\augm} [1] { \pmb{{#1}}}  

\newcommand {\x} {\augm x}





\newcommand {\hide} [1] {}

\pdfoutput=1

%
\ifCLASSINFOpdf
\else
\fi
\usepackage{algorithm}
\usepackage{algorithmic}
\hyphenation{op-tical net-works semi-conduc-tor}

\begin{document}
%
\title{Delay-Aware Model-Based Reinforcement Learning for Continuous Control}
%
%
%


\author{Baiming~Chen, Mengdi~Xu, Liang~Li, Ding~Zhao*
\thanks{Baiming Chen is with the State Key Laboratory of Automotive Safety and Energy, Tsinghua University, Beijing 100084, China (e-mail:cbm17@mails.tsinghua.edu.cn).}
\thanks{Mengdi~Xu is with the Department of Mechanical Engineering, Carnegie Mellon University, Pittsburgh, PA 15213, USA (e-mail:mengdixu@andrew.cmu.edu).}
\thanks{Liang Li is with the State Key Laboratory of Automotive Safety and Energy, Tsinghua University, Beijing, China, and is also with Collaborative Innovation Center of Electric Vehicles in Beijing 100084, China (e-mail:liangl@tsinghua.edu.cn).}
\thanks{Ding Zhao is with the Department of Mechanical Engineering, Carnegie Mellon University, Pittsburgh, PA 15213, USA (e-mail:dingzhao@cmu.edu).}
}

\maketitle

\begin{abstract}
Action delays degrade the performance of reinforcement learning in many real-world systems. This paper proposes a formal definition of delay-aware Markov Decision Process and proves it can be transformed into standard MDP with augmented states using the Markov reward process. We develop a delay-aware model-based reinforcement learning framework that can incorporate the multi-step delay into the learned system models without learning effort. Experiments with the Gym and MuJoCo platforms show that the proposed delay-aware model-based algorithm is more efficient in training and transferable between systems with various durations of delay compared with off-policy model-free reinforcement learning methods. Codes available at: https://github.com/baimingc/dambrl.
\end{abstract}

\begin{IEEEkeywords}
model-based reinforcement learning, robotic control, model-predictive control, delay
\end{IEEEkeywords}

\IEEEpeerreviewmaketitle

\section{Introduction}
Deep reinforcement learning has made rapid progress in games \cite{mnih2013playing, silver2016mastering} and robotic control \cite{schulman2015trust, duan2016benchmarking, hwangbo2017control}. However, most algorithms are evaluated in turn-based simulators like Gym \cite{brockman2016openai} and MuJoCo \cite{todorov2012mujoco}, where the action selection and actuation of the agent are assumed to be instantaneous. Action delay, although prevalent in many areas of the real world, including robotic systems \cite{imaida2004ground, jin2008robust, bayan2009brake}, communication networks \cite{moon1999estimation} and parallel computing \cite{hannah2018unbounded}, may not be directly handled in this scheme. 

Previous research has shown that delays would not only degrade the performance of the agent but also induce instability to the dynamic systems \cite{gu2003survey, dugard1998stability, chung1995time}, which is a fatal threat in safety-critical systems like connected and autonomous vehicles (CAVs) \cite{gong2016constrained}. For instance, it usually takes more than 0.4 seconds for the hydraulic automotive brake system to generate the desired deceleration \cite{bayan2009brake}, which could make a huge impact on the planning and control modules of CAVs \cite{ploeg2013lp}. The control community has proposed several methods to address this problem, such as using Smith predictor \cite{astrom1994new, matausek1999modified}, Artstein reduction \cite{artstein1982linear, moulay2008finite}, finite spectrum assignment \cite{manitius1979finite, mondie2003finite}, and $H_{\infty}$ controller \cite{mirkin2000extraction}. Most of these methods depend on accurate models \cite{niculescu2001delay, gu2003survey}, which is usually not available in the real-world applications.

Recently, DRL has offered the potential to resolve this issue. The problems that DRL solves are usually modeled as Markov Decision Process (MDP).
However, ignoring the delay of agents violates the Markov property and results in partially observable MDPs, or POMDPs, with historical actions as hidden states. From \cite{singh1994learning}, it is shown that solving POMDPs without estimating hidden states can lead to arbitrarily suboptimal policies. To retrieve the Markov property, the delayed system was reformulated as an augmented MDP problem such as the work in \cite{katsikopoulos2003markov, walsh2009learning}. While the problem was elegantly formulated, the computational cost increases exponentially as the delay increases.
Travnik \textit{et al.} \cite{travnik2018reactive} showed that the traditional MDP framework is ill-defined, but did not provide a theoretical analysis.
Ramstedt \& Pal \cite{ramstedt2019real} proposed an off-policy model-free algorithm known as Real-Time Actor-Critic to address the delayed problem by adapting Q-learning to state-value-learning.
The delay issue could also be relieved with the model-based manner by learning a dynamics model to predict the future state as in \cite{walsh2009learning}. However, this paper mainly focused on discrete tasks and could suffer from the curse of dimensionality when discretizing state and action space for continuous control tasks \cite{lillicrap2015continuous}.

In this paper, we further explore reinforcement learning methods on delayed systems in the following three aspects: 1) We formally define the multi-step delayed MDP and prove it can be converted to standard MDP via the Markov reward process.
2) We propose a general framework of delay-aware model-based reinforcement learning for continuous control tasks.
3) By synthesizing the state-of-the-art modeling and planning algorithms, we develop the Delay-Aware Trajectory Sampling (DATS) algorithm which can efficiently solve delayed MDPs with minimal degradation of performance.

The rest of the paper is organized as follows. We first review the preliminaries in Section~\ref{sec:pre} including the definition of Delay-Aware Markov Decision Process (DA-MDP). In Section~\ref{sec:del}, we formally define the Delay-Aware Markov Reward Process (DA-MRP) and prove its solidity. In Section~\ref{sec:dambrl}, we introduce the proposed framework of delay-aware model-based reinforcement learning for DA-MDPs with a concrete algorithm: Delay-Aware Trajectory Sampling (DATS). In Section~\ref{sec:exp}, we demonstrate the performance of the proposed algorithm in challenging continuous control tasks on Gym and MuJoCo platforms.

\section{Preliminaries}
\label{sec:pre}
\subsection{Delay-Free MDP and Reinforcement Learning}

The Delay-free MDP framework is suitable to model games like chess and go, where the state keeps still until a new action is executed. The definition of a delay-free MDP is:

\begin{definition}
\label{def:MDP}
A Markov Decision Process (MDP) is characterized by a tuple with \\
(1) state space $\mathcal{S}$, \hspace{0.15cm} (2) action space $\mathcal{A}$, \hspace{0.15cm} \\
(3) initial state distribution $\rho: \mathcal{S} \to \mathbb R$,\\
(4) transition distribution {$p: \mathcal{S} \times \mathcal{S} \times \mathcal{A} \to \mathbb R$}, \hspace{0.15cm}\\
(5) reward function $r: \mathcal{S} \times \mathcal{A} \to \mathbb R$.
\end{definition}

In the framework of reinforcement learning, the problem is often modeled as an MDP, and the agent is represented by a policy $\pi$ that directs the action selection, given the current observation.
The objective is to find the optimal policy $\pi^*$ that maximizes the expected cumulative discounted reward $\Sigma_{t=0}^{T}\gamma^tr\left(s_t, a_t\right)$. 
Throughout this paper, we assume that we know the reward function $r$ and do not know the transition distribution $p$.

\subsection{Delay-Aware MDP}

The delay-free MDP is problematic with agent delays and could lead to arbitrarily suboptimal policies \cite{singh1994learning}. To retrieve the Markov property, Delay-Aware MDP (DA-MDP) is proposed:
\begin{definition} \label{def:DMDP}
A Delaye-Aware Markov Decision Process $D\!A\!M\!D\!P(E, n)=(\pmb{\mathcal{X}}, \pmb{\mathcal{A}}, \augm \rho, \augm p, \augm r)$ augments a Markov Decision Process $M\!D\!P(E) = (\mathcal{S}, \mathcal{A}, \rho, p, r)$, such that \\
(1) state space $\pmb{\mathcal{X}} = \mathcal{S} \times \mathcal{A}^{n}$ where $n$ denotes the delay step, \\
(2) action space $\pmb{\mathcal{A}} = \mathcal{A}$, \\
(3) initial state distribution
\begin{equation*}
\begin{aligned}
\augm \rho(\x_0) = 
\augm \rho({s_0, a_0, \dots, a_{n-1}}) = \rho(s_0) \ \prod_{i=0}^{n-1}\delta(a_i - c_i),
\end{aligned}
\footnote{$\delta$ is the Dirac delta function. If $y \sim \delta(\cdot - x)$ then $y=x$ with probability one.}
\end{equation*}
where $(c_i)_{i=1:n-1}$ denotes the initial action sequence, \\
(4) transition distribution
\begin{equation*}
\begin{aligned}
&\augm p(\x_{t+1}|\x_t, \augm a_t) \\
&= 
\augm p({s_{t+1}, a_{t+1}^{(t+1)}, \dots, a_{t+n}^{(t+1)}} | {s_t, a_t^{(t)}, \dots, a_{t+n-1}^{(t)} }, \augm a_t) \\
&= p(s_{t+1} | s_t, a_t^{(t)}) \prod_{i=1}^{n-1}\delta(a_{t+i}^{(t+1)} - a_{t+i}^{(t)}) \delta(a_{t+n}^{(t+1)} - \augm a_t),
\end{aligned}
\end{equation*}
(5) reward function 
\begin{equation*}
\begin{aligned}
\augm r(\x_t, \augm a_t) = 
\augm r({s_t, a_t, \dots, a_{t+n-1}}, \augm a_t) = r(s_t, a_t).
\end{aligned}
\end{equation*}
The state vector of DA-MDP is augmented with an action sequence 
being executed in the next $n$ steps where $n \in \mathbb{N}$ is the delay duration.
The superscript of $a_{t_1}^{(t_2)}$ means that the action is one element of $\x_{t_2}$ and the subscript represents the action executed time.
$\augm a_t$ is the action taken at time $t$ in a DA-MDP but executed at time $t+n$ due to the $n$-step action delay, i.e. $\augm a_t= a_{t+n}$.
\end{definition}

Policies interacting with the DA-MDPs, which also need to be augmented since the dimension of state vectors has changed, are denoted by bold $\augm \pi$. Fig.~\ref{fig:mdps}, which compares MDP and DA-MDP, shows that the state vector of DA-MDP is augmented with an action sequence 
being executed in the next $n$ steps.

\begin{figure}[t]
\centering
\vskip 0.2in
\begin{subfigure}{0.33\textwidth}

\includegraphics[width=\linewidth]{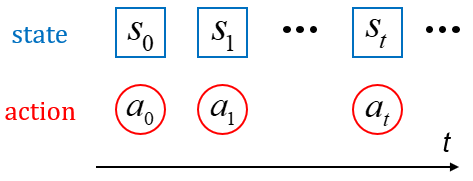} 
\caption{$M\!D\!P(E)$}
\label{fig:mdp}
\end{subfigure}

\begin{subfigure}{0.33\textwidth}
\includegraphics[width=\linewidth]{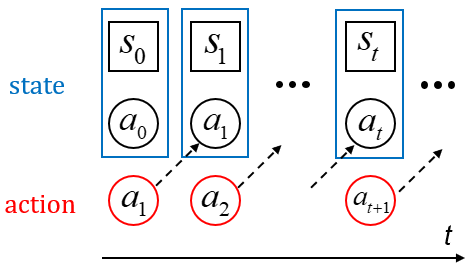}
\caption{$D\!A\!M\!D\!P(E, 1)$}
\label{fig:dmdp}
\end{subfigure}

\begin{subfigure}{0.33\textwidth}
\includegraphics[width=\linewidth]{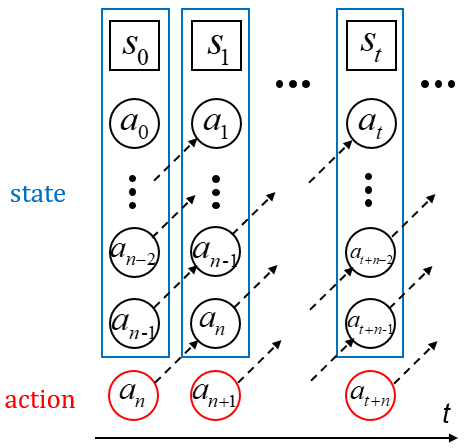}
\caption{$D\!A\!M\!D\!P(E, n)$}
\label{fig:dmdpn}
\end{subfigure}
\caption{Comparison between $M\!D\!P(E)$, $D\!M\!D\!P(E, 1)$ and $D\!M\!D\!P(E, n)$. $n \in \mathbb{N}$ denotes the action delay step. $s_t$ denotes the observed state while $a_t$ denotes the action executed, both at time $t$. Arrows represent how the action selected in the current time step will be included in the future state.}
\label{fig:mdps}
\vskip -0.2in
\end{figure}

It should be noted that both action delay and observation delay could exist in real-world systems. However, it has been proved that from the point of view of the learning agent, observation and action delays form the same mathematical problem, since they both lead to the delay between the moment of measurement and the actual action \cite{katsikopoulos2003markov}.
For simplicity, we will focus on the action delay in this paper, and the algorithm and conclusions should be able to generalize to systems with observation delays. We divide the action delay into two main parts into action selection and action actuation. 
For action selection, 
the time length depends on the complexity of the algorithm and the computing power of the processor. System users can limit the action selection time by constraining the searching depth, as in AlphaGo \cite{silver2016mastering}. 
For action actuation, on the other hand, the actuators (e.g., motors, hydraulic machines) also need time to respond to the selected action. For instance, it usually takes more than 0.4 seconds for the hydraulic automotive brake system to generate the desired deceleration \cite{bayan2009brake}. The actuation delay is usually decided by the hardware.

To formulate a delayed system into a DA-MDP, we must select a proper time step for discretely updating the environment. As shown in Fig.~\ref{fig:dmdpn}, the action selected at the current time step $\augm a_t$ will be encapsulated in $\x_{t+1}$.
Thus, $\augm a_t$ must be accessible at time $t+1$ since the agent needs it as the state, which requires the action selection delay to be at most one time step.
We satisfy this requirement by making the time step of the DA-MDP larger than the action selection duration.

The above definition of DA-MDP assumes that the delay time of the agent is an integer multiple of the time step of the system, which is usually not true for many real-world tasks like robotic control. For that, Schuitema \textit{et al.} \cite{schuitema2010control} has proposed an approximation approach by assuming a \textit{virtual} effective action at each discrete system time step, which could achieve first-order equivalence in linearizable systems with arbitrary delay time. With this approximation, the above DA-MDP structure can be adapted to systems with arbitrary-value delays.

\section{Delay-Aware Markov Reward Process}
\label{sec:del}
Our first step is to show that an MDP with multi-step action delays can be converted to a regular MDP problem by state augmentation. We prove the equivalence of these two by comparing their corresponding Markov Reward Processes (MRPs).
The delay-free MRP is:

\begin{definition} \label{def:MRP}
A Markov Reward Process $(\mathcal{S}, \rho, \kappa, \bar r) = M\!R\!P(M\!D\!P(E), \pi)$ can be recovered from a Markov Decision Process $M\!D\!P(E) = (\mathcal{S}, \mathcal{A}, \rho, p, r)$ with a policy $\pi$, such that
\begin{equation*} \label{mdp_kernel1}
\kappa(s_{t+1} | s_t) = \int_\mathcal{A} p(s_{t+1} | s_t, a) \pi(a | s_t) \ d a,
\end{equation*}
\begin{equation*} \label{mdp_kernel}
\bar r(s_t) = \int_\mathcal{A} r(s_t, a) \pi(a | s_t) \ d a,
\end{equation*}
where $\kappa$ is the sate transition distribution and $\bar r$ is the state reward function of the MRP. $E$ is the original environment without delays.
\end{definition}
In the delay-free framework, at each time step, the agent selects an action based on the current observation. The action will immediately be executed in the environment to generate the next observation.
However, if an action delay exists, the interaction manner between the environment and the agent changes, and a different MRP is generated.
An illustration of the delayed interaction between agents and the environment is shown in Fig.~\ref{fig:acbuf}. The agent interacts with the environment not directly but through an action buffer.

\begin{figure}[h]
  \centering
  \includegraphics[width=0.8\linewidth]{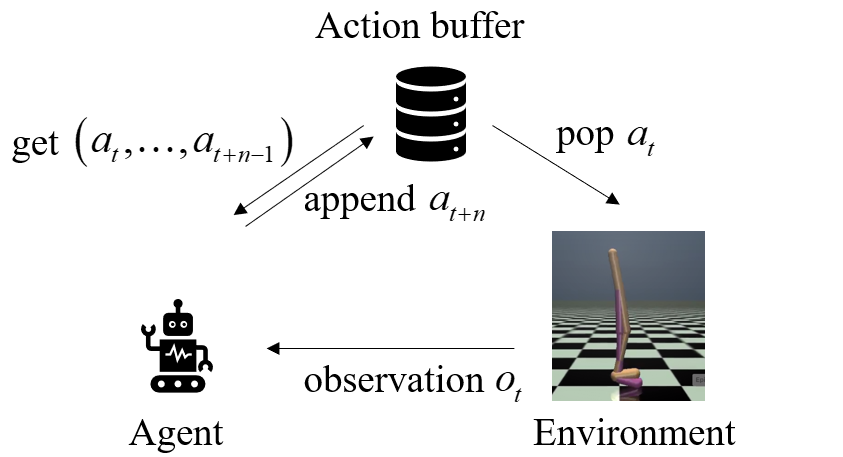}
  \caption{Interaction manner between a delayed agents and the environment. The agent interacts with the environment not directly but through an action buffer. At time $t$, the agent get the observation $o_t$ from the environment as well as a future action sequences $(a_t, \dots, a_{t+n-1})$ from the action buffer. The agents then decide their future action $a_{t+n}$ and store them in the action buffer. The action buffer then pops actions $a_t$ to be executed to the environment.}
  \label{fig:acbuf}
\end{figure}

Based on the delayed interaction manner between the agent and the environment, the Delay-Aware MRP (DA-MRP) is defined as below.

\begin{definition} \label{def:DAMRP}
A Delay-Aware Markov Reward Process $(\pmb{\mathcal{X}}, \augm \rho, \pmb{\kappa}, \pmb{\bar r}) = D\!A\!M\!R\!P(M\!D\!P(E), \augm \pi, n)$ can be recovered from a Markov Decision Process $M\!D\!P(E) = (\mathcal{S}, \mathcal{A}, \rho, p, r)$ with a policy $\augm \pi$ and $n$-step action delay, such that \\
(1) state space
\begin{equation*}
    \pmb{\mathcal{X}} =\mathcal{S} \times \mathcal{A}^{n},
\end{equation*}
(2) initial state distribution
\begin{equation*}
\begin{aligned}
\augm \rho(\x_0) = \augm \rho({s_0, a_0, \dots, a_{n-1}}) = \rho(s_0) \ \prod_{i=0}^{n-1}\delta(a_i - c_i)
\end{aligned}
\end{equation*}
where $(c_i)_{i=1:n-1}$ denotes the initial action sequence, \\
(3) state transition distribution 
\begin{equation*}
\begin{aligned}
&\augm \kappa\left(\x_{t+1}|\x_t\right)  \\
&= \augm \kappa({s_{t+1}, a_{t+1}^{(t+1)}, \dots, a_{t+n}^{(t+1)}} | {s_t, a_t^{(t)}, \dots, a_{t+n-1}^{(t)}}) \\
&= p(s_{t+1} | s_t, a_t) \prod_{i=1}^{n-1}\delta(a_{t+i}^{(t+1)} - a_{t+i}^{(t)}) \augm \pi(a_{t+n}^{(t+1)} |\x_t) 
,
\end{aligned}
\end{equation*}
(4) state-reward function 
\begin{equation*}
\pmb{\bar r}({\x_t}) = \pmb{\bar r}({s_t, a_t, \dots, a_{t+n-1}}) = r(s_t, a_t),
\end{equation*}
\end{definition}

With Def.~\ref{def:MDP}-~\ref{def:DAMRP}, we are ready to prove that DA-MDP is a correct augmentation of MDP with delay, as stated in Theorem.~\ref{the:equn}.

\begin{restatable}{theorem}{TheoremRttb2}
\label{the:equn}
A policy $\augm \pi: \mathcal{A} \times \pmb{\mathcal{X}} \to \mathbb R$ interacting with $D\!A\!M\!D\!P(E, n)$ in the delay-free manner produces the same Markov Reward Process as $\augm \pi$ interacting with $M\!D\!P(E)$ with n-step action delays, i.e.
\begin{equation} \label{DMDP_equality}
    D\!A\!M\!R\!P(M\!D\!P(E), \augm \pi, n) = M\!R\!P(D\!A\!M\!D\!P(E, n), \augm \pi).
\end{equation}
%
\end{restatable}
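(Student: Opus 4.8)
The plan is to use the fact that an MRP is completely determined by the quadruple consisting of its state space, initial state distribution, transition kernel, and state-reward function. Hence to establish the equality \eqref{DMDP_equality} it suffices to verify that each of these four objects agrees on the two sides: on the left, the components are read off directly from Def.~\ref{def:DAMRP}; on the right, they are produced by applying the MRP recovery formulas of Def.~\ref{def:MRP} to the augmented process $DAMDP(E,n)$ of Def.~\ref{def:DMDP}. Two of the four checks are immediate. The state space is $\pmb{\mathcal X} = \mathcal S \times \mathcal A^{n}$ on both sides, and the initial distribution $\augm \rho$ is the identical expression in Def.~\ref{def:DMDP} and Def.~\ref{def:DAMRP}, because recovering an MRP from an MDP leaves the initial distribution untouched.

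For the state-reward function I would apply the recovery formula $\pmb{\bar r}(\x_t) = \int_{\pmb{\mathcal A}} \augm r(\x_t, \augm a_t)\, \augm \pi(\augm a_t \mid \x_t)\, d\augm a_t$ to the augmented reward of Def.~\ref{def:DMDP}. The key observation is that $\augm r(\x_t, \augm a_t) = r(s_t, a_t)$ is independent of the integration variable $\augm a_t$, so it factors out of the integral; since $\augm \pi(\cdot \mid \x_t)$ is a probability density integrating to one, the integral collapses and yields $r(s_t, a_t)$, which is precisely the state-reward of Def.~\ref{def:DAMRP}.

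The substantive step is the transition kernel. Applying $\augm \kappa(\x_{t+1} \mid \x_t) = \int_{\pmb{\mathcal A}} \augm p(\x_{t+1}\mid \x_t, \augm a_t)\,\augm \pi(\augm a_t \mid \x_t)\, d\augm a_t$ to the augmented transition of Def.~\ref{def:DMDP}, I note that the factor $p(s_{t+1}\mid s_t, a_t^{(t)})\prod_{i=1}^{n-1}\delta(a_{t+i}^{(t+1)} - a_{t+i}^{(t)})$ carries no dependence on $\augm a_t$ and therefore pulls outside the integral. What remains is $\int_{\pmb{\mathcal A}} \delta(a_{t+n}^{(t+1)} - \augm a_t)\,\augm \pi(\augm a_t \mid \x_t)\, d\augm a_t$, which by the sifting property of the Dirac delta equals $\augm \pi(a_{t+n}^{(t+1)} \mid \x_t)$. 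Reassembling the factors reproduces exactly the kernel of Def.~\ref{def:DAMRP}, completing the verification and hence the theorem.

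I expect the only delicate point to be the careful handling of the Dirac-delta integration in this last step. The deltas encode the deterministic buffer shift, and one must argue that the single delta involving $\augm a_t$ is the only factor that interacts with the policy inside the integral, while the remaining $n-1$ deltas merely copy the queued actions forward and are inert under integration. Once the sifting identity is invoked on that one factor, the two kernels become manifestly identical; everything else is bookkeeping about the superscript/subscript indexing of the action buffer.
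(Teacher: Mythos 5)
Your proposal is correct and follows essentially the same route as the paper's proof: a component-by-component verification of the four MRP ingredients, with the reward integral collapsing because $\augm r$ is independent of the integration variable and the transition kernel reducing via the sifting property of the single Dirac delta that involves $\augm a_t$. The paper's own argument carries out exactly these two integral computations, so no gap or divergence to report.
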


\begin{proof} For any $M\!D\!P(E)=(\mathcal{S}, \mathcal{A}, \rho, p, r)$, we need to prove that the above two MRPs are the same. Referring to Def.~\ref{def:DMDP}~and~\ref{def:MRP}, for $M\!R\!P(D\!M\!D\!P(E, n), \augm \pi)$, we have \\
(1) state space \hspace{0.3cm} $\mathcal{S} \times \mathcal{A}^{n}$,\\
(2) initial distribution 
\begin{align*}
\augm \rho(\x_0) &= \augm \rho({s_0, a_0, \dots, a_{n-1}})\\
&= \rho(s_0) \prod_{i=0}^{n-1}\delta(a_i - c_i),
\end{align*}
(3)  transition kernel
\begin{align*}
\kappa(\x_{t+1} | \x_t) =& \int_\mathcal{A} \augm p(\x_{t+1} | \x_t, \augm a_t) \augm \pi(\augm a | \x_t) \ d \augm a \\
=&\int_\mathcal{A} p(s_{t+1} | s_t, a_t) \prod_{i=1}^{n-1}\delta(a_{t+i}^{(t+1)} - a_{t+i}^{(t)}) \\
&\delta(a_{t+n}^{(t+1)} - \augm a) \ \augm\pi(\augm a | x_t) \ d \augm a  \\
=& p(s_{t+1} | s_t, a_t) \ \prod_{i=1}^{n-1}\delta(a_{t+i}^{(t+1)} - a_{t+i}^{(t)}) \augm \pi(a_{t+n}^{(t+1)} | \x_t), 
\end{align*}
(4) state-reward function
\begin{align*}
\bar r(\x_t) &= \int_\mathcal{A} r(\x_t, \augm a) \augm \pi(\augm a | \x_t) \ d \augm a  \\
&=\int_\mathcal{A} r(s_t, a_t) \ \augm\pi(\augm a | \x_t) \ d \augm a \\
&=r(s_t, a_t) \int_A \pi(\augm a | \x_t) \ d \augm a \\
&=r(s_t, a_t).
\end{align*}
Since the expanded terms of $M\!R\!P(D\!M\!G(E, n), \augm \pi)$ match the corresponding terms of $D\!A\!M\!R\!P(M\!G(E), \augm \pi, n)$ (Def.~\ref{def:DAMRP}), Eq.~\ref{DMDP_equality} holds.
\end{proof}

\section{Delay-Aware Model-Based Reinforcement Learning}
\label{sec:dambrl}
Theorem.~\ref{the:equn} shows that instead of solving MDPs with action delays, we can alternatively solve the corresponding DA-MDPs.
From the transition function of a $D\!A\!M\!D\!P(E, n)$ with multi-step delays 
\begin{equation} \label{DMDP_dynamics}
\begin{aligned}
&\augm p(\x_{t+1}|\x_t, \augm a_t) = \\
&p(s_{t+1} | s_t, a_t) \prod_{i=1}^{n-1}\delta(a_{t+i}^{(t+1)} - a_{t+i}^{(t)}) \delta(a_{t+n}^{(t+1)} - \augm a_t),
\end{aligned}
\end{equation}
we see that the dynamics is divided into the unknown original dynamics $p(s_{t+1} | s_t, a_t)$ and the known dynamics $\prod_{i=1}^{n}\delta(a_{t+i}^{(t+1)} - a_{t+i}^{(t)}) \delta(a_{t+n} - \augm a_t)$ caused by the action delays. 
Thus, solving DA-MDPs with standard reinforcement learning algorithms will suffer from the curse of dimensionality if assuming a completely unknown environment. In this section, we propose a delay-aware model-based reinforcement learning framework to achieve high computational efficiency.

As mentioned, RTAC \cite{ramstedt2019real} has been proposed to deal the delay problem. However, we will show that this method is only efficient for 1-step delay. When $n=1$ for $D\!M\!D\!P(E, n)$, any transition $\left(s_t, a_t, s_{t+1}\right)$ in the replay buffer is always a valid transition in the Bellman equation with the state-value function as
\begin{equation*} \label{EqV}
\begin{aligned}
&v_\text{\textit{DA-MDP(E, n)}}^{\augm \pi}\left(\x_t\right) = r\left(s_t, a_t\right) \\
&+\E_{s_{t+1}} \left[\E_{\augm a_t} \left[ v_\text{\textit{DA-MDP(E, n)}}^{\augm \pi}\left({s_{t+1},a_{t+1}, \dots, a_{t+n-1} ,\augm a_{t}}\right) \right] \right],
\end{aligned}
\end{equation*}
where $\augm a_t \sim \augm \pi(\cdot | \x_t)$, and $s_{t+1} \sim p(\cdot | s_t, a_t)$. 
However, when considering the multi-step delay, i.e., $n\geq2$, it is challenging to use off-policy model-free reinforcement learning because augmented transitions need to be stored and we only learn the effect of an action on the state-value function after $n$-step updates of the Bellman equation. 
Also, the dimension of the state vector $\x$ increases with the delay step $n$, resulting in the exponential growth of the state-space.

Another limitation of model-free methods for DA-MDPs is that it can be difficult to transfer the learned knowledge (e.g., value functions, policies) when the action delay step $n$ changes because the input dimensions of the value functions and policies depend on the delay step $n$. The agent must learn again from scratch whenever the system delay changes, which is usual in real-world systems. 

The problems of model-free methods have motivated us to develop model-based reinforcement learning (MBRL) methods to combat the action delay. MBRL tries to solve MDPs by learning the dynamics model of the environment. Intuitively, we can inject our knowledge into the learned model without leaning effort.
Based on this intuition, in this paper, we propose a delay-aware MBRL framework to solve multi-step DA-MDPs which can efficiently alleviate the aforementioned two problems of model-free methods.
From Eq.~\ref{DMDP_dynamics}, the unknown part is exactly the dynamics that we learn in MBRL algorithms for delay-free MDPs. In our proposed framework, only $p(s_{t+1} | s_t, a_t)$ is learned and the dynamics caused by the delay is combined with the learned model by adding action delays to the interaction scheme. As mentioned, the learned dynamics model is transferable between systems with different delay steps, since we can adjust the interaction scheme based on the delay step 
(See Section~\ref{sec:trans} for an explanation of the transfer performance).

The proposed framework of delay-aware MBRL is shown in Algorithm~\ref{alg:rtmbrl}. In the \textbf{for} loop, we are solving a planning problem, given a dynamics model with an initial action sequence. For that, the learned model is used not only for the optimal control but also for the state prediction to compensate for the delay effect. By iteratively training, we gradually improve the model accuracy and obtain better planning performance and , especially in high-reward regions.

\begin{algorithm}[!t]
   \caption{Delay-Aware Model-Based Reinforcement Learning}
   \label{alg:rtmbrl}
\begin{algorithmic}
    \STATE {\bfseries Input:} action delay step $n$, initial actions $(a_i)_{i=0,\dots, n-1}$, and task horizon $T$
    \STATE {\bfseries Output:} learned transition probability $\widetilde p$
    \floatname{algorithm}{Procedure}
     \STATE{Initialize replay buffer $\mathbb{D}$ with a random controller for one trial.}
     \FOR {Episode $k = 1$ to $K$}
     \STATE{Train a dynamics model $\widetilde p$ given $\mathbb{D}$.}
      \STATE{Optimize action sequence \smash{$a_{n+1:T}$} with initial actions $(a_i)_{i=0,\dots, n-1}$ and  estimated system dynamics $\widetilde p$}
     \STATE{Record experience: \smash{$\mathbb{D} \leftarrow \mathbb{D} \cup (s_{t}, a_{t}, s_{t+1})_{t=0:T}$}.}
     \ENDFOR
\end{algorithmic}
\end{algorithm}

\subsection{Delay-Aware Trajectory Sampling}
Recently, several MBRL algorithms have been proposed to match the asymptotic performance of model-free algorithms on challenging benchmark tasks, including probabilistic ensemble with trajectory sampling (PETS) \cite{chua2018deep}, model-based policy optimization (MBPO) \cite{janner2019trust}, model-based planning with policy networks (POPLIN) \cite{wang2019exploring}, etc. In this section, we will combine the state-of-the-art PETS algorithm with the proposed delay-aware MBRL framework to generate a new method for solving DA-MDPs. We name the method as the Delay-Aware Trajectory Sampling (DATS). 

In DATS, the dynamic model is represented by an ensemble of probabilistic neural networks that output Gaussian distributions which helps model the aleatoric uncertainty. The use of the ensemble can help incorporate the epistemic uncertainty of the dynamic model and approximate the Bayesian posterior \cite{osband2016deep, lakshminarayanan2017simple}. The planning of action sequences applies the concept of model predictive control (MPC) with the cross-entropy method (CEM) for elite selection of the sampled action sequences. In the most inner \textbf{for} loop of Algorithm~\ref{alg:rtpets}, with the current state $s_t$, we first propagate state particles with the same action sequence $a_{t:t+n-1}$ to make various estimates of the future state $s_{t+n}$ 
, and then use sampled action sequences $a_{t+n:t+n+m}$ to predict $s_{t+n+1:t+n+1+m}$ for each particle. In this way, the uncertainty of the learned model is considered in both state-prediction and planning phases, which improves the robustness of the algorithm. The complete algorithm is shown in Algorithm~\ref{alg:rtpets}. 

\begin{algorithm}[!t]
   \caption{Delay-Aware Trajectory Sampling}
   \label{alg:rtpets}
\begin{algorithmic}
    \STATE {\bfseries Input:} action delay step $n$, initial actions $(a_i)_{i=0,\dots, n-1}$, task horizon $T$, planning horizon $m$
    \STATE {\bfseries Output:} learned transition probability $\widetilde p$
    \floatname{algorithm}{Procedure}
     \STATE{Initialize transition buffer $\mathbb{D}$ with a random controller for one trial.}
     \FOR {Trial $k = 1$ to $K$}
     \STATE{Train a probabilistic dynamics model $\widetilde p$ given $\mathbb{D}$.}
     \STATE {Initialize action buffer $\mathbb{A} = (a_i)_{i=0,\dots, n-1}$}
     \FOR {Time $t = 0$ to $T-n$}

     \STATE{Observe $s_{t}$}
     \FOR {Sampled \smash{$a_{t+n:t+n+m}\!\sim\!\text{CEM}(\cdot)$}}
     \STATE {Concatenate $a_{t+n:t+n+m}$ with $a_{t:t+n-1}$}
     \STATE{Propagate state particles \smash{$s_{\tau}$} using $\widetilde p$.}
     \STATE{Evaluate actions as \smash{$\sum_{\tau=t}^{t+n+m}{{r(s_{\tau}, a_{\tau})}}$}}
     \STATE{Update \smash{$\text{CEM}(\cdot)$} distribution.}
     \ENDFOR
      \STATE{Pick the first action \smash{${a_{t+n}}$} from optimal action sequence and store in $\mathbb{A}$ }
     \ENDFOR
     \STATE{Record experience: \smash{$\mathbb{D} \leftarrow \mathbb{D} \cup (s_{t}, {a_{t}}, s_{t+1})_{t=0:T}$}.}
     \ENDFOR
\end{algorithmic}
\end{algorithm}

Model-based methods have a natural advantage when dealing with multi-step DA-MDPs when compared with model-free methods. 
With model-free methods, the effect of an action on the state-value function can only be learned after $n$-time updates of the Bellman equation. The agent implicitly wastes both time and effort to learn the known part of system dynamics caused by action delay since it does not understand the meaning of the elements in the state vectors.
As mentioned, the advantage of model-based methods is that they incorporate delay effect into the system dynamics without extra learning 
(see Section~\ref{sec:vs} for a performance comparison between model-free and model-based methods).

\begin{figure}[t]
\centering
\begin{subfigure}[t]{0.2\textwidth}
\includegraphics[height=0.8\textwidth]{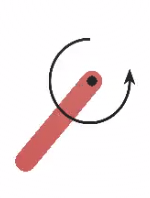} 
\caption{Pendulum}
\end{subfigure}
\begin{subfigure}[t]{0.2\textwidth}
\includegraphics[height=0.8\textwidth]{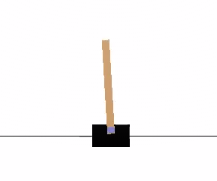}
\caption{CartPole}
\end{subfigure}
\begin{subfigure}[t]{0.2\textwidth}
\includegraphics[height=0.8\textwidth]{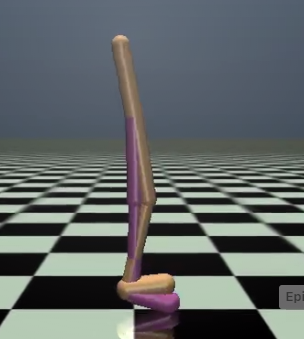}
\caption{Walker2d}
\end{subfigure}
 \begin{subfigure}[t]{0.2\textwidth}
\includegraphics[height=0.8\textwidth]{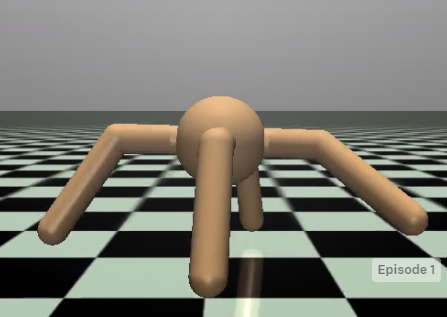}
\caption{Ant}
\end{subfigure}
\caption{Benchmark environments.}
\label{fig:envs}
\end{figure}

\begin{figure*}[!t]
\vskip 0.2in
\centering
\begin{subfigure}{0.4\textwidth}
\includegraphics[width=\linewidth]{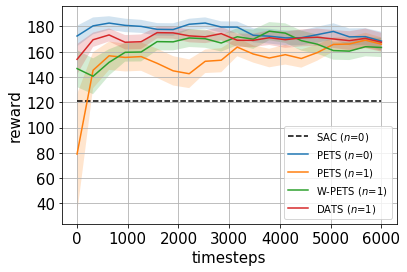} 
\caption{Pendulum-v0}
\label{fig:cart}
\end{subfigure}
\begin{subfigure}{0.4\textwidth}
\includegraphics[width=\textwidth]{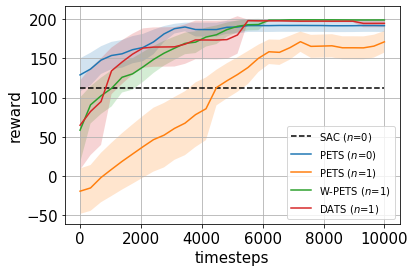}
\caption{CartPole-v1}
\label{fig:rea}
\end{subfigure}
\begin{subfigure}{0.4\textwidth}
\includegraphics[width=\linewidth]{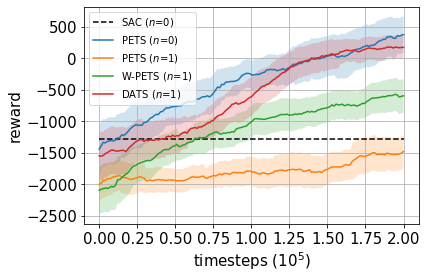}
\caption{Walker2d-v1}
\label{fig:wal}
\end{subfigure}
 \begin{subfigure}{0.4\textwidth}
\includegraphics[width=\linewidth]{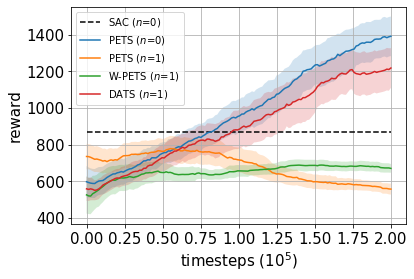}
\caption{Ant-v1}
\label{fig:ant}
\end{subfigure}
\caption{Performances (means and standard deviations of rewards) of different MBRL algorithms in Gym environments. The environment is non-delayed for SAC and PETS ($n=0$) and is one-step-delayed for other algorithms. DATS is the proposed algorithm. The results indicate that the performance degradation resulting from the environment action delay is minimal when using DATS is minimal.}
\label{fig:performance}
\vskip -0.2in
\end{figure*}

\section{Experiments}
\subsection{Reinforcement Learning in Delayed Systems}
\label{sec:exp}


Experiments are conducted across four OpenAI Gym/Mujoco \cite{brockman2016openai, todorov2012mujoco} environments for continuous control: $\mathtt{Pendulum}$, $\mathtt{Cartpole}$, $\mathtt{Walker2d}$ and $\mathtt{Ant}$ as shown in Fig.~\ref{fig:envs}. The details of the environments are described below.

$\mathtt{Pendulum.}$ A single-linked pendulum is fixed on the one end, with an actuator located on the
joint. In this version of the problem, the pendulum starts in a random position, and the goal is to swing it up to keep it upright.
Observations include the joint angle and the joint angular velocity. The reward penalizes position and speed deviations from the upright equilibrium and the magnitude of the control input.

$\mathtt{Cartpole.}$ A pole is connected to the cart through an un-actuated joint, and the cart moves along a frictionless track. Control the system by applying a real-number force to the cart. The pole starts upright, and the goal is to prevent it from falling over. Let $\theta_t$ be the angle of the pole away from the upright
vertical position, and $x_t$ be the position where the cart leaves the center of the rail at time $t$. The
4-dimensional observation at time $t$ is ($x_t$, $\theta_t$, $\dot{x}_t$, $\dot{\theta}_t$). A reward of +1 is provided for every timestep that the pole remains upright.

$\mathtt{Walker2d.}$ Walker2d is a 2-dimensional bipedal robot, consisting of 7 rigid links, including a torso and 2 legs.
There are 6 actuators, 3 for each leg. The observations include the (angular) position and speed of
all joints. The reward is the $x$ direction speed plus the penalty for the distance to a target height and the magnitude of control input. The goal is to walk forward as
fast as possible while keeping the standing height with minimal control input.

$\mathtt{Ant.}$ Ant is a 3-dimensional 4-legged robot with 13 rigid links (including a torso and 4 legs). There are 8 actuators at the joints, 2 for
each leg. The observations include the (angular) position and speed of all
joints. The reward is the $x$ direction speed plus
penalty for the distance to a target height and the magnitude of control input. The goal is to walk forward as fast as possible, and
approximately maintain the normal standing height with minimal control input.

Among the 4 continuous control tasks, the tasks of $\mathtt{Walker2d}$ and $\mathtt{Ant}$ are considered more challenging than $\mathtt{Pendulum}$ and $\mathtt{Cartpole}$ indicated by the dimension of dynamics. 

In experiments, we add delays manually by revising the interaction framework between the agents and the environments if needed.

To show the advantage of DATS, we use 5 algorithms:
\begin{itemize}
    \item \textbf{SAC} ($n=0$): Soft actor-critic \cite{haarnoja2018soft} is a state-of-the-art model-free reinforcement learning algorithm serving as a model-free baseline. Only the performances at the maximum time step are visualized.
    \item \textbf{PETS} ($n=0$): The PETS algorithm \cite{chua2018deep} is implemented in the non-delayed environment without action delays, providing the performance upper bound for algorithms in delayed environments.
    \item \textbf{PETS} ($n=1$): The PETS algorithm is blindly implemented in the 1-step delayed environment without modeling action delays.
    \item \textbf{W-PETS} ($n=1$): The PETS algorithm is augmented to solve DA-MDPs with $n=1$. However, it inefficiently tries to learn the \textbf{whole} dynamics $p(\x_{t+1}|\x_t, \augm a_t)$ as shown in Eq.~\ref{DMDP_dynamics} including the known part caused by actions delays.
    \item \textbf{DATS} ($n=1$): DATS is our proposed method as in Algorithm~\ref{alg:rtpets}. It incorporates the action delay into the framework and only learns the unknown original dynamics $p(s_{t+1}|s_t, a_t)$ as shown in Eq.~\ref{DMDP_dynamics}.

\end{itemize}

Each algorithm is run with 10 random seeds in each environment. 
Fig.~\ref{fig:performance} shows the algorithmic performances. 
As the model-free baseline, SAC is not as efficient as PETS in the four environments when there are no delays.
While PETS ($n=1$) has the worst performance because the agent does not consider the action delay and learns the wrong dynamics, it can still make some improvements in simple environments like $\mathtt{Pendulum}$ (Fig.~\ref{fig:cart}), and $\mathtt{Cartpole}$ (Fig.~\ref{fig:rea}) due to the correlation of transitions. 
PETS ($n=1$) performs poorly for tasks that need accurate transition dynamics for planning in $\mathtt{Walker2d}$ (Fig.~\ref{fig:wal}) and $\mathtt{Ant}$ (Fig.~\ref{fig:ant}).
W-PETS achieves similar performance with PETS in $\mathtt{Pendulum}$ and $\mathtt{Cartpole}$.
But its performance also degrades a lot when the task gets more difficult since it has to learn the dynamics of the extra $n$ dimensions of states caused by the $n$-step action delays (Fig.~\ref{fig:wal} and \ref{fig:ant}).
DATS performs the same as PETS for the four tasks, i.e., action delays do not affect DATS.

\begin{figure*}[!t]
\vskip 0.2in
\centering
\begin{subfigure}{0.4\textwidth}
\includegraphics[width=\linewidth]{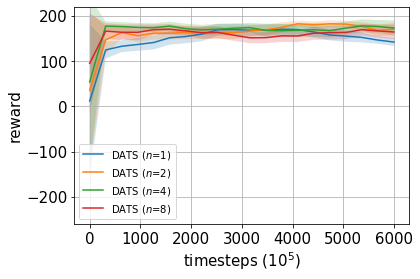} 
\caption{DATS in Pendulum-v0}
\label{fig:rtpetsc}
\end{subfigure}
\begin{subfigure}{0.4\textwidth}
\includegraphics[width=\textwidth]{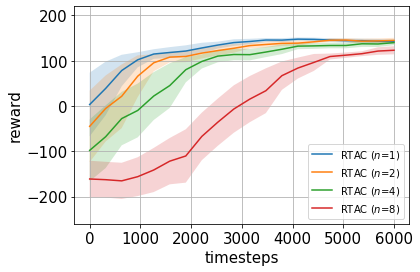}
\caption{RTAC in Pendulum-v0}
\label{fig:rtacc}
\end{subfigure}
\begin{subfigure}{0.4\textwidth}
\includegraphics[width=\linewidth]{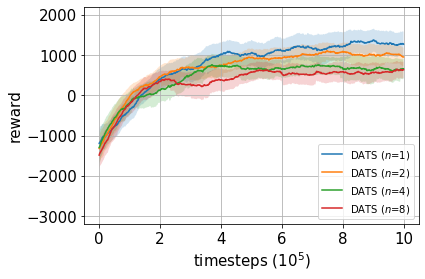}
\caption{DATS in Walker2d-v1}
\label{fig:rtpetsw}
\end{subfigure}
 \begin{subfigure}{0.4\textwidth}
\includegraphics[width=\linewidth]{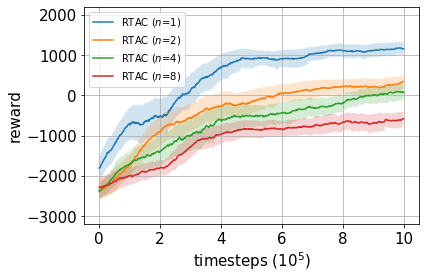}
\caption{RTAC in Walker2d-v1}
\label{fig:rtacw}
\end{subfigure}
\caption{Performances (means and standard deviations of rewards) of DATS and RTAC in Gym environments with different action delay steps. The model-based algorithm DATS outperforms the model-free algorithm RTAC in terms of efficiency and stability. RTAC degrades significantly as the delay step increases. }
\label{fig:mbmf}
\vskip -0.2in
\end{figure*}

The reason why DATS in delayed environment matches the asymptotic performance of PETS in the non-delayed environment is that the quality and quantity of transitions $\left(s_t, a_t, s_{t+1}\right)$ used for model training in DATS are almost the same with PETS, despite the action delay. The slight difference is due to the distribution shift caused by the predefined initial actions, which has minimal influence on the overall performance if the task horizon is long enough compared to the action delay step.

\subsection{Model-Based vs Model-Free}
\label{sec:vs}
To show the advantage of the proposed delay-aware MBRL framework when dealing with multi-step delays,
we compare the model-free algorithm RTAC \cite{ramstedt2019real} and the proposed model-based DATS. RTAC is suitable for solving DA-MDPs and is modified based on SAC, but as explained in Section~\ref{sec:dambrl}, RTAC can avoid extra learning only when the action delay is exactly one-step. 

We test them in the simple environment $\mathtt{Pendulum}$ and the complex environment $\mathtt{Walker2d}$ with various delay step $n$. The learning curves in Fig.~\ref{fig:mbmf}.
show that DATS outperforms RTAC in efficiency and stability. DATS keeps consistent performance while RTAC degrades significantly as the delay step increases, even for the simple task $\mathtt{Pendulum}$, as shown in Fig.~\ref{fig:rtacc}. The reason is that with the original dynamics of $\mathtt{Pendulum}$ and $\mathtt{Walker2d}$ fixed, the extra dynamics caused by the action delay rapidly dominates the dimension of the state space of the learning problem as the delay step increases, and exponentially more transitions are needed to sample and learn.

\begin{table*}[t]
\caption{Reward matrix of DATS and RTAC}
\label{tab:trans}
\centering
\begin{small}
\begin{subtable}{\linewidth}\centering
\caption{Pendulum-v0}
\begin{tabular}{lcccccr}
\toprule
\multirow{2}{*}{$n$} & \multicolumn{4}{c}{DATS} & \multirow{2}{*}{RTAC}\\
& $\widetilde p_1$ & $\widetilde p_2$ & $\widetilde p_4$  &$\widetilde p_8$ & \\
\midrule
1   &154.10$\pm$14.86& 156.37$\pm$13.29& \textbf{163.29$\pm$16.03} & 149.78$\pm$13.778&121.36$\pm$12.63\\
2 &\textbf{163.92$\pm$15.23}& 162.93$\pm$14.26& 155.90$\pm$16.11 & 160.07$\pm$18.30 &109.44$\pm$12.58\\
4    &160.39$\pm$12.63& 162.87$\pm$16.21& \textbf{171.53$\pm$10.85} &166.29$\pm$14.22 &80.15$\pm$27.94\\
8    &163.29$\pm$15.53& 151.20$\pm$13.44& 166.37$\pm$13.32 &\textbf{166.59$\pm$10.59} &-110.28$\pm$58.89\\
16    &153.41$\pm$17.35& \textbf{159.09$\pm$19.88} & 153.89$\pm$14.22  & 149.90$\pm$16.86 &-122.98$\pm$64.82\\
\bottomrule
\end{tabular}
\end{subtable}
\vskip 0.1in
\begin{subtable}{\linewidth}\centering
\caption{Walker2d-v1}
\begin{tabular}{lcccccr}
\toprule
\multirow{2}{*}{$n$} & \multicolumn{4}{c}{DATS} & \multirow{2}{*}{RTAC}\\
& $\widetilde p_1$ & $\widetilde p_2$ & $\widetilde p_4$  &$\widetilde p_8$ & \\
\midrule
1   &471.34$\pm$426.26& \textbf{524.76$\pm$387.67}& 496.13$\pm$442.89 & 395.78$\pm$409.98&-471.13$\pm$896.28\\
2 & 549.73$\pm$410.76& 487.32$\pm$334.49& \textbf{527.98$\pm$477.19} & 492.56$\pm$490.01 &-754.42$\pm$722.79\\
4    & \textbf{485.29$\pm$438.98}& 439.23$\pm$529.39& 248.60$\pm$611.82 &552.91$\pm$410.76 &-1252.47$\pm$710.10\\
8    & 356.93$\pm$431.58& 438.82$\pm$563.13& \textbf{482.09$\pm$316.34} &247.97$\pm$595.63 &-1766.85$\pm$404.28\\
16    & 292.38$\pm$521.86& 311.44$\pm$409.80 & \textbf{473.97$\pm$309.81}  & 401.34 $\pm$634.12 &-2173.87$\pm$625.76\\
\bottomrule
\end{tabular}
\end{subtable}
\end{small}

\end{table*}
\subsection{Transferable Knowledge}
\label{sec:trans}

In this section, we show the transferability of the knowledge learned by DATS. 
We first learn several dynamics models \{$\widetilde p_i$\} in $\mathtt{Pendulum}$ and $\mathtt{Walker2d}$ with DATS, where $i=1,2,4,8$ denotes the action delay step during training. The learned models are then tested in environments with $n$-step action delays ($n=1,2,4,8,16$).
We train the dynamics model in each environment with the same amount of transitions $(s_t, a_t, s_{t+1})$: 2,000 for $\mathtt{Pendulum}$ and 200,000 for $\mathtt{Walker2d}$. The planning method and hyper-parameters stay the same as those in Algorithm~\ref{alg:rtpets}. RTAC provides the model-free baseline for each environment. Recall that since RTAC is a model-free algorithm, when changing the delay steps, it must learn from scratch.

The reward matrix in Table~\ref{tab:trans} shows that DATS performs well even when the delay step is twice larger than the maximum step during model-training ($n=16$) for $\mathtt{Pendulum}$ and $\mathtt{Walker2d}$. 
We infer that the learned knowledge (dynamics in this case) is transferable, i.e., when the action delay of the system changes, the estimated dynamics are still useful by simply adjusting the known part of the dynamics caused by the action delay. On the other hand, RTAC performs poorly as the delay step increases since the dimension of the state space grows and the agent has to spend more effort to learn the delay dynamics. Notably, the learned knowledge of model-free methods cannot transfer when the delay step changes.

The results suggest that the transferability of DATS makes it suitable for Sim-to-Real tasks when there are action delays in real systems, and that the delay step during model training does not have to be equal to the delay step in a real system.
Therefore, if the delay steps of the real-world tasks are known and fixed, we can incorporate the delay effect with the original dynamics learned in the delay-free simulator, and obtain highly efficient Sim-to-Real transformations. 

\section{Conclusion}

This paper proposed a general delay-aware MBRL framework which solves multi-step DA-MDPs with high efficiency and transferability.
Our key insight is that the dynamics of DA-MDPs can be divided into two parts: the known part caused by delays, and the unknown part inherited from the original delay-free MDP.
The proposed delay-aware MBRL framework learns the original unknown dynamics and incorporates the known part of the dynamics explicitly.
We also provided an efficient implementation of delay-aware MBRL as DATS by combining a state-of-the-art modeling and planning method, PETS. The experiment results showed that the performance of PETS in instantaneous environments is similarly to the performance of DATS in delayed environments with respect to delay duration. Moreover, the learned dynamics by DATS is transferable when the time of action delay changes, thus making DATS the preferred algorithm for tasks in real-world systems.

\bibliographystyle{IEEEtran}
\bibliography{main}

\begin{thebibliography}{10}
\providecommand{\url}[1]{#1}
\csname url@samestyle\endcsname
\providecommand{\newblock}{\relax}
\providecommand{\bibinfo}[2]{#2}
\providecommand{\BIBentrySTDinterwordspacing}{\spaceskip=0pt\relax}
\providecommand{\BIBentryALTinterwordstretchfactor}{4}
\providecommand{\BIBentryALTinterwordspacing}{\spaceskip=\fontdimen2\font plus
\BIBentryALTinterwordstretchfactor\fontdimen3\font minus
  \fontdimen4\font\relax}
\providecommand{\BIBforeignlanguage}[2]{{%
\expandafter\ifx\csname l@#1\endcsname\relax
\typeout{** WARNING: IEEEtran.bst: No hyphenation pattern has been}%
\typeout{** loaded for the language `#1'. Using the pattern for}%
\typeout{** the default language instead.}%
\else
\language=\csname l@#1\endcsname
\fi
#2}}
\providecommand{\BIBdecl}{\relax}
\BIBdecl

\bibitem{mnih2013playing}
V.~Mnih, K.~Kavukcuoglu, D.~Silver, A.~Graves, I.~Antonoglou, D.~Wierstra, and
  M.~Riedmiller, ``Playing atari with deep reinforcement learning,''
  \emph{arXiv preprint arXiv:1312.5602}, 2013.

\bibitem{silver2016mastering}
D.~Silver, A.~Huang, C.~J. Maddison, A.~Guez, L.~Sifre, G.~Van Den~Driessche,
  J.~Schrittwieser, I.~Antonoglou, V.~Panneershelvam, M.~Lanctot \emph{et~al.},
  ``Mastering the game of go with deep neural networks and tree search,''
  \emph{nature}, vol. 529, no. 7587, p. 484, 2016.

\bibitem{schulman2015trust}
J.~Schulman, S.~Levine, P.~Abbeel, M.~Jordan, and P.~Moritz, ``Trust region
  policy optimization,'' in \emph{International conference on machine
  learning}, 2015, pp. 1889--1897.

\bibitem{duan2016benchmarking}
Y.~Duan, X.~Chen, R.~Houthooft, J.~Schulman, and P.~Abbeel, ``Benchmarking deep
  reinforcement learning for continuous control,'' in \emph{International
  Conference on Machine Learning}, 2016, pp. 1329--1338.

\bibitem{hwangbo2017control}
J.~Hwangbo, I.~Sa, R.~Siegwart, and M.~Hutter, ``Control of a quadrotor with
  reinforcement learning,'' \emph{IEEE Robotics and Automation Letters},
  vol.~2, no.~4, pp. 2096--2103, 2017.

\bibitem{brockman2016openai}
G.~Brockman, V.~Cheung, L.~Pettersson, J.~Schneider, J.~Schulman, J.~Tang, and
  W.~Zaremba, ``Openai gym,'' \emph{arXiv preprint arXiv:1606.01540}, 2016.

\bibitem{todorov2012mujoco}
E.~Todorov, T.~Erez, and Y.~Tassa, ``Mujoco: A physics engine for model-based
  control,'' in \emph{2012 IEEE/RSJ International Conference on Intelligent
  Robots and Systems}.\hskip 1em plus 0.5em minus 0.4em\relax IEEE, 2012, pp.
  5026--5033.

\bibitem{imaida2004ground}
T.~Imaida, Y.~Yokokohji, T.~Doi, M.~Oda, and T.~Yoshikawa, ``Ground-space
  bilateral teleoperation of ets-vii robot arm by direct bilateral coupling
  under 7-s time delay condition,'' \emph{IEEE Transactions on Robotics and
  Automation}, vol.~20, no.~3, pp. 499--511, 2004.

\bibitem{jin2008robust}
M.~Jin, S.~H. Kang, and P.~H. Chang, ``Robust compliant motion control of robot
  with nonlinear friction using time-delay estimation,'' \emph{IEEE
  Transactions on Industrial Electronics}, vol.~55, no.~1, pp. 258--269, 2008.

\bibitem{bayan2009brake}
F.~P. Bayan, A.~D. Cornetto, A.~Dunn, and E.~Sauer, ``Brake timing measurements
  for a tractor-semitrailer under emergency braking,'' \emph{SAE International
  Journal of Commercial Vehicles}, vol.~2, no. 2009-01-2918, pp. 245--255,
  2009.

\bibitem{moon1999estimation}
S.~B. Moon, P.~Skelly, and D.~Towsley, ``Estimation and removal of clock skew
  from network delay measurements,'' in \emph{IEEE INFOCOM'99. Conference on
  Computer Communications. Proceedings. Eighteenth Annual Joint Conference of
  the IEEE Computer and Communications Societies. The Future is Now (Cat. No.
  99CH36320)}, vol.~1.\hskip 1em plus 0.5em minus 0.4em\relax IEEE, 1999, pp.
  227--234.

\bibitem{hannah2018unbounded}
R.~Hannah and W.~Yin, ``On unbounded delays in asynchronous parallel
  fixed-point algorithms,'' \emph{Journal of Scientific Computing}, vol.~76,
  no.~1, pp. 299--326, 2018.

\bibitem{gu2003survey}
K.~Gu and S.-I. Niculescu, ``Survey on recent results in the stability and
  control of time-delay systems,'' \emph{Journal of dynamic systems,
  measurement, and control}, vol. 125, no.~2, pp. 158--165, 2003.

\bibitem{dugard1998stability}
L.~Dugard and E.~I. Verriest, \emph{Stability and control of time-delay
  systems}.\hskip 1em plus 0.5em minus 0.4em\relax Springer, 1998, vol. 228.

\bibitem{chung1995time}
L.~Chung, C.~Lin, and K.~Lu, ``Time-delay control of structures,''
  \emph{Earthquake Engineering \& Structural Dynamics}, vol.~24, no.~5, pp.
  687--701, 1995.

\bibitem{gong2016constrained}
S.~Gong, J.~Shen, and L.~Du, ``Constrained optimization and distributed
  computation based car following control of a connected and autonomous vehicle
  platoon,'' \emph{Transportation Research Part B: Methodological}, vol.~94,
  pp. 314--334, 2016.

\bibitem{ploeg2013lp}
J.~Ploeg, N.~Van De~Wouw, and H.~Nijmeijer, ``Lp string stability of cascaded
  systems: Application to vehicle platooning,'' \emph{IEEE Transactions on
  Control Systems Technology}, vol.~22, no.~2, pp. 786--793, 2013.

\bibitem{astrom1994new}
K.~J. Astrom, C.~C. Hang, and B.~Lim, ``A new smith predictor for controlling a
  process with an integrator and long dead-time,'' \emph{IEEE transactions on
  Automatic Control}, vol.~39, no.~2, pp. 343--345, 1994.

\bibitem{matausek1999modified}
M.~R. Matausek and A.~Micic, ``On the modified smith predictor for controlling
  a process with an integrator and long dead-time,'' \emph{IEEE Transactions on
  Automatic Control}, vol.~44, no.~8, pp. 1603--1606, 1999.

\bibitem{artstein1982linear}
Z.~Artstein, ``Linear systems with delayed controls: A reduction,'' \emph{IEEE
  Transactions on Automatic control}, vol.~27, no.~4, pp. 869--879, 1982.

\bibitem{moulay2008finite}
E.~Moulay, M.~Dambrine, N.~Yeganefar, and W.~Perruquetti, ``Finite-time
  stability and stabilization of time-delay systems,'' \emph{Systems \& Control
  Letters}, vol.~57, no.~7, pp. 561--566, 2008.

\bibitem{manitius1979finite}
A.~Manitius and A.~Olbrot, ``Finite spectrum assignment problem for systems
  with delays,'' \emph{IEEE transactions on Automatic Control}, vol.~24, no.~4,
  pp. 541--552, 1979.

\bibitem{mondie2003finite}
S.~Mondi{\'e} and W.~Michiels, ``Finite spectrum assignment of unstable
  time-delay systems with a safe implementation,'' \emph{IEEE Transactions on
  Automatic Control}, vol.~48, no.~12, pp. 2207--2212, 2003.

\bibitem{mirkin2000extraction}
L.~Mirkin, ``On the extraction of dead-time controllers from delay-free
  parametrizations,'' \emph{IFAC Proceedings Volumes}, vol.~33, no.~23, pp.
  169--174, 2000.

\bibitem{niculescu2001delay}
S.-I. Niculescu, \emph{Delay effects on stability: a robust control
  approach}.\hskip 1em plus 0.5em minus 0.4em\relax Springer Science \&
  Business Media, 2001, vol. 269.

\bibitem{singh1994learning}
S.~P. Singh, T.~Jaakkola, and M.~I. Jordan, ``Learning without state-estimation
  in partially observable markovian decision processes,'' in \emph{Machine
  Learning Proceedings 1994}.\hskip 1em plus 0.5em minus 0.4em\relax Elsevier,
  1994, pp. 284--292.

\bibitem{katsikopoulos2003markov}
K.~V. Katsikopoulos and S.~E. Engelbrecht, ``Markov decision processes with
  delays and asynchronous cost collection,'' \emph{IEEE transactions on
  automatic control}, vol.~48, no.~4, pp. 568--574, 2003.

\bibitem{walsh2009learning}
T.~J. Walsh, A.~Nouri, L.~Li, and M.~L. Littman, ``Learning and planning in
  environments with delayed feedback,'' \emph{Autonomous Agents and Multi-Agent
  Systems}, vol.~18, no.~1, p.~83, 2009.

\bibitem{travnik2018reactive}
J.~B. Travnik, K.~W. Mathewson, R.~S. Sutton, and P.~M. Pilarski, ``Reactive
  reinforcement learning in asynchronous environments,'' \emph{Frontiers in
  Robotics and AI}, vol.~5, p.~79, 2018.

\bibitem{ramstedt2019real}
S.~Ramstedt and C.~Pal, ``Real-time reinforcement learning,'' in \emph{Advances
  in Neural Information Processing Systems}, 2019, pp. 3067--3076.

\bibitem{lillicrap2015continuous}
T.~P. Lillicrap, J.~J. Hunt, A.~Pritzel, N.~Heess, T.~Erez, Y.~Tassa,
  D.~Silver, and D.~Wierstra, ``Continuous control with deep reinforcement
  learning,'' \emph{arXiv preprint arXiv:1509.02971}, 2015.

\bibitem{schuitema2010control}
E.~Schuitema, L.~Bu{\c{s}}oniu, R.~Babu{\v{s}}ka, and P.~Jonker, ``Control
  delay in reinforcement learning for real-time dynamic systems: a memoryless
  approach,'' in \emph{2010 IEEE/RSJ International Conference on Intelligent
  Robots and Systems}.\hskip 1em plus 0.5em minus 0.4em\relax IEEE, 2010, pp.
  3226--3231.

\bibitem{chua2018deep}
K.~Chua, R.~Calandra, R.~McAllister, and S.~Levine, ``Deep reinforcement
  learning in a handful of trials using probabilistic dynamics models,'' in
  \emph{Advances in Neural Information Processing Systems}, 2018, pp.
  4754--4765.

\bibitem{janner2019trust}
M.~Janner, J.~Fu, M.~Zhang, and S.~Levine, ``When to trust your model:
  Model-based policy optimization,'' \emph{arXiv preprint arXiv:1906.08253},
  2019.

\bibitem{wang2019exploring}
T.~Wang and J.~Ba, ``Exploring model-based planning with policy networks,''
  \emph{arXiv preprint arXiv:1906.08649}, 2019.

\bibitem{osband2016deep}
I.~Osband, C.~Blundell, A.~Pritzel, and B.~Van~Roy, ``Deep exploration via
  bootstrapped dqn,'' in \emph{Advances in neural information processing
  systems}, 2016, pp. 4026--4034.

\bibitem{lakshminarayanan2017simple}
B.~Lakshminarayanan, A.~Pritzel, and C.~Blundell, ``Simple and scalable
  predictive uncertainty estimation using deep ensembles,'' in \emph{Advances
  in Neural Information Processing Systems}, 2017, pp. 6402--6413.

\bibitem{haarnoja2018soft}
T.~Haarnoja, A.~Zhou, P.~Abbeel, and S.~Levine, ``Soft actor-critic: Off-policy
  maximum entropy deep reinforcement learning with a stochastic actor,''
  \emph{arXiv preprint arXiv:1801.01290}, 2018.

\end{thebibliography}

\end{document}